\documentclass[conference,letterpaper]{IEEEtran}

\usepackage[utf8]{inputenc} 
\usepackage[T1]{fontenc}
\usepackage{amsmath, amssymb, amsfonts}
\usepackage{graphicx}
\usepackage{enumerate}
\usepackage{tabularx}
\usepackage{array}
\usepackage{dsfont}
\usepackage{bm}
\usepackage{xcolor}
\usepackage{algorithm}
\usepackage{algpseudocode}
\usepackage{multirow}
\usepackage{url}
\usepackage{cite}
\usepackage{balance}

\newtheorem{theorem}{Theorem}
\newtheorem{lemma}{Lemma}

\newtheorem{definition}{Definition}

\newtheorem{assumption}{Assumption}
\newtheorem{remark}{Remark}

\begin{document}

\title{Decentralized Conformal Novelty Detection via Quantized Model Exchange}

\author{
\IEEEauthorblockN{Kyle Loh and Yu Xiang}
\IEEEauthorblockA{Department of Mathematics and Statistics\\
                   Florida Atlantic University \\
                  Boca Raton, Florida \\
                 Email: \{kloh2023, yxiang\}@fau.edu} }

\maketitle

\begin{abstract}
This work studies decentralized novelty detection with global false discovery rate (FDR) control across heterogeneous composite null distributions, without sharing the raw data due to privacy and bandwidth considerations. We propose to extend the AdaDetect framework to decentralized settings based on the exchange of quantized surrogate models, allowing independent agents to share low-precision representations of locally learned non-conformity score functions. We prove that evaluating data against these quantized composite scores preserves conditional exchangeability, providing rigorous finite-sample guarantees for global FDR control. Empirical studies on synthetic datasets confirm our theoretical results, demonstrating that the proposed approach maintains competitive statistical power while drastically reducing the communication cost.
\end{abstract}

\section{Introduction}

\noindent In large-scale novelty detection, controlling the rate of false positives is paramount for reliable inference. The false discovery rate (FDR) provides a scalable metric for this control via the celebrated Benjamini-Hochberg (BH) procedure~\cite{benjamini1995controlling}, supported by a foundational literature extending it to correlated statistics~\cite{benjamini2001control}, Bayesian interpretations~\cite{storey2003positive, efron2001empirical}, and composite null hypotheses~\cite{benjamini2009selective}. Building on conformal prediction (e.g., see~\cite{vovk2005algorithmic,angelopoulos2024theoretical} and the references therein) and the BH procedure~\cite{benjamini1995controlling}, several recent frameworks such as~\cite{bates2023testing,marandon2024adaptive} have been developed, leveraging machine learning models to construct data-driven score functions that boost statistical power while preserving finite-sample FDR guarantees.

We focus on AdaDetect~\cite{marandon2024adaptive}, a rigorous methodology for conformal anomaly detection. This framework assumes a centralized architecture where all data can be pooled and evaluated simultaneously. In the real world, massive datasets are often decentralized across distributed networks. For example, financial institutions must collaboratively detect sophisticated fraud rings without exposing proprietary customer transaction records~\cite{dalpozzolo2015calibrating}. Similar constraints exist in multi-hospital healthcare networks~\cite{nguyen2022federated} and distributed IoT sensor arrays~\cite{xie2011anomaly}. Privacy regulations and severe communication bandwidth limitations explicitly prohibit the pooling of raw data in these environments.

On the other hand, existing works have explored decentralized FDR control mechanisms~\cite{ramdas2017qute,pournaderi2023large,pournaderi2023sample}---including those designed for adversarial settings~\cite{zhang2025distributed}---that allow independent agents to control global error rates through the cooperative exchange of summary statistics of pre-computed local test statistics (i.e., $p$-values) rather than raw data. However, they are focused on the BH procedure and do not directly address the challenge of adaptive learning of highly non-linear score functions to capture the boundaries of a global and heterogeneous composite null distribution. 
 
In this work, we make an attempt to extend the AdaDetect framework~\cite{marandon2024adaptive} to a decentralized setting under a global composite null hypothesis. When agents hold heterogeneous local distributions, isolated evaluations fail because local models lack the domain knowledge to recognize external definitions of normalcy. To enforce the  consensus required by a global composite null, we propose a decentralized framework based on \emph{model exchange} with quantization. Rather than sharing raw data or scalar thresholds, agents exchange quantized surrogate score functions along with scalar rejection counts. This approach preserves the conditional exchangeability required for rigorous FDR control and reduces total communication. Our experiments demonstrate competitive statistical power on synthetic datasets.


\section{Background}

\subsection{Problem Statement: Decentralized Novelty Detection}
Consider a network of $K \ge 2$ independent agents. Each agent $j$ holds a private local dataset consisting of true null data drawn from an \emph{unknown} underlying local distribution $P_0^{(j)}$ and novelties each drawn from \emph{unknown} and potentially \emph{different} distributions. 

The objective of decentralized novelty detection is to evaluate a distributed set of test points simultaneously and classify them as either normal or anomalous with respect to the global composite null distribution. We make an attempt to develop decentralized procedures that satisfy two conditions:
\begin{enumerate}
    \item \textbf{Global FDR control:} The expected proportion of false discoveries across the entire network must be  bounded by a target level $\alpha$. Let $V^{(j)}$ and $R^{(j)}$ denote the number of false discoveries and total rejections at agent~$j$, respectively. The procedure must guarantee:
    \begin{equation*}
        \text{FDR}_{\text{global}} = \mathbb{E} \left[ \frac{\sum_{j=1}^K V^{(j)}}{\left(\sum_{j=1}^K R^{(j)}\right) \vee 1} \right] \le \alpha,
    \end{equation*}
    where $\{a \vee b\}:=\max\{a,b\}$.
    
    \item \textbf{Privacy or communication-constraints:} Achieve this global FDR control without transmitting raw data measurements.
\end{enumerate}

Specifically, we build on a seminal framework, AdaDetect~\cite{marandon2024adaptive}, which operates by evaluating the conformity of test points against a reference set of known nulls in the simple null problem. First, it constructs conditionally exchangeable non-conformity scores. Crucially, the conditional exchangeability property guarantees that the framework can calculate empirical $p$-values, which
are then passed into the standard Benjamini-Hochberg (BH) procedure, leading to FDR control.

\subsection{Notation}\label{section: notation}
Consider a network of $K \ge 2$ independent agents. Following the positive-unlabeled (PU) classification formulation of the AdaDetect framework~\cite{marandon2024adaptive}, we formalize the local datasets. For any Agent $j$, noting that the specific values of $n, m, m_0,k$ may vary across agents, we observe:
\begin{itemize}
    \item null training samples $Y^{(j)} = (Y^{(j)}_1, \dots, Y^{(j)}_n)$ of ``typical'' measurements;
    \item test samples $X^{(j)} = (X^{(j)}_1, \dots, X^{(j)}_m)$ of ``unlabeled'' measurements;
    \item the fully ordered sequence $Z^{(j)} = (Z^{(j)}_1, \dots, Z^{(j)}_{n+m}) = (Y^{(j)}_1, \dots, Y^{(j)}_n, X^{(j)}_1, \dots, X^{(j)}_m)$.
\end{itemize}

Let $\mathcal{H}_0^{(j)}$ and $\mathcal{H}_1^{(j)}$ denote the index sets of the true nulls and true novelties within the test sample, respectively. The sequence of all true nulls for Agent $j$ is defined as $U^{(j)} = (Y^{(j)}_1, \dots, Y^{(j)}_n, X^{(j)}_i, i \in \mathcal{H}_0^{(j)})$, and the sequence of novelties is $V^{(j)} = (X^{(j)}_i, i \in \mathcal{H}_1^{(j)})$.

The null training sample $Y^{(j)} = (Y^{(j)}_1, \dots, Y^{(j)}_n)$ is split into a disjoint training set of size $k$ and a calibration set of size $\ell = n - k$.  Following the AdaDetect methodology~\cite{marandon2024adaptive}, Agent $j$ uses the training set and the unlabeled test sample to learn a local scoring function $g^{(j)}$. This function evaluates any given observation and assigns a scalar score, where higher values of $g^{(j)}(x)$ indicate a greater departure from the local null distribution $P_0^{(j)}$.

\subsubsection*{Composite Null Setting}

We define the global composite null distribution as the family of local null distributions across the network, $\mathcal{P}_0 = \{P_0^{(1)}, \dots, P_0^{(K)}\}$. 

For any given test observation $X_i^{(j)}$, we define:
\begin{itemize}
    \item $X_i^{(j)}$ is a true composite null\footnote{Equivalently, an observation $X^{(j)}_i$ is a true composite null if $X^{(j)}_i \sim P_0^{(j)}$ for some $j \in \{1, \dots, K\}$.} if $X_i^{(j)} \sim P_0^{(j)}$    \item $X_i^{(j)}$ is a true composite novelty if $X_i^{(j)} \nsim P_0^{(j)}$ for all $j \in \{1, \dots, K\}$.
\end{itemize}

\subsubsection{Data Assumptions}
To prove that our decentralized composite scores satisfy these conditions, we introduce the following formal assumptions mapping directly to the original AdaDetect framework:

\smallskip
\begin{assumption}[Exchangeability]
\label{assumption:exchangeability_invariance}
For each agent $j$:
\begin{enumerate}[(i)]
    \item The sequence of raw null observations $U^{(j)}$ is exchangeable conditionally on the alternative observations $V^{(j)}$.
    \item The local score function $g^{(j)}$ evaluates any arbitrary point $z$ identically regardless of the internal ordering of the subset $\{Z^{(j)}_{k+1}, \dots, Z^{(j)}_{n+m}\}$, provided the sequence of the first $k$ elements remains fixed.
\end{enumerate}
\end{assumption}
\smallskip

\begin{remark}
Under the PU classification framework, the conditional exchangeability of the individual non-conformity scores is implied by Assumption~\ref{assumption:exchangeability_invariance} combined with the properties of the datasets $H^{(k)}$ (see details from~\cite[Lemma 3.2]{marandon2024adaptive}).
\end{remark}
\smallskip

\begin{assumption}[Almost Surely No Ties]
\label{assumption:no_ties}
In the unquantized and quantized model exchange settings, the composite scores evaluated on the null and alternative datasets almost surely have no ties.
\end{assumption}
\smallskip

\subsection{Baselines}
\label{sec:baselines}
In decentralized novelty detection, the goal is to identify points that are anomalous to the entire network. To evaluate methods that attempt to achieve this without data pooling, we establish comparative baselines, starting from an idealized centralized setting. Throughout this work, we assume that the datasets are independent across all agents. 

\smallskip
\begin{assumption}[Independence]
\label{assumption:agent_independence}
The local datasets for all $K$ agents are mutually independent.
\end{assumption}
\smallskip

\begin{lemma}[Global FDR Bound for Independent Agents] \label{lem:global_fdr_sum}
If $K$ independent agents independently control their local FDR such that $\text{FDR}^{(j)} \le \alpha/K$ for all $j \in \{1, \dots, K\}$, then the global FDR across the disjoint union of their rejection sets is bounded by $\alpha$.
\end{lemma}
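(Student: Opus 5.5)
The plan is to bound the global false discovery proportion pointwise by the sum of the local false discovery proportions, and then take expectations. Write $V = \sum_{j=1}^K V^{(j)}$ and $R = \sum_{j=1}^K R^{(j)}$ for the global counts. Because the rejection sets are disjoint, these sums are exactly the numbers of false and total discoveries of the union procedure. The first step is the deterministic inequality
\begin{equation*}
\frac{V}{R \vee 1} = \sum_{j=1}^K \frac{V^{(j)}}{R \vee 1} \le \sum_{j=1}^K \frac{V^{(j)}}{R^{(j)} \vee 1}.
\end{equation*}
Each summand is handled separately. If $R^{(j)} = 0$, then $V^{(j)} = 0$ (false discoveries are a subset of rejections), so the term vanishes on both sides. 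If $R^{(j)} \ge 1$, then $R \vee 1 = R \ge R^{(j)} = R^{(j)} \vee 1$, so the denominator only shrinks on the right. This holds for every realization of the data.

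The second step takes expectations and uses linearity:
\begin{equation*}
\text{FDR}_{\text{global}} \le \sum_{j=1}^K \mathbb{E}\left[\frac{V^{(j)}}{R^{(j)} \vee 1}\right] = \sum_{j=1}^K \text{FDR}^{(j)} \le K \cdot \frac{\alpha}{K} = \alpha.
\end{equation*}

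The main point to be careful about is what is and is not needed. The argument never uses Assumption~\ref{assumption:agent_independence}; the bound holds under arbitrary dependence across agents. Independence only matters in that each $\text{FDR}^{(j)}$ is defined as a marginal expectation over agent $j$'s own data, which is exactly what the local guarantee supplies. I would also make explicit that $V^{(j)} \le R^{(j)}$, and that $V^{(j)}$ counts false discoveries with respect to the same null notion used globally. Once those definitional points are fixed, the proof is a two-line union-type bound.
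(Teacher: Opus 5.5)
Your proof is correct and takes the same route as the paper. The paper simply asserts the pointwise bound $\text{FDR}_{\text{global}} \le \sum_{j} \mathbb{E}[V^{(j)}/(R^{(j)} \vee 1)]$ and sums the local levels, while you supply the case check it leaves implicit ($R^{(j)}=0$ forces $V^{(j)}=0$, otherwise $R \ge R^{(j)}$); your remark that independence across agents is never used is also right, and your final $\le \alpha$ is the correct form of the paper's written $= \alpha$.
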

\smallskip

This follows from the simple observation that $\text{FDR}_{\text{global}}\le \sum_{j=1}^K \mathbb{E} \left[ \frac{V^{(j)}}{R^{(j)} \vee 1} \right] = \alpha$.
\smallskip

\subsubsection{Baseline 1: Centralized Oracle Setting}\label{Baseline: OracleHull}
The ideal approach to testing on composite nulls relies on Bayesian methods where one knows the prior. If the network possesses a known mixture probability distribution for the true nulls, the overall distribution can be obtained as a weighted integral, effectively reducing the composite null problem to a simple null problem (see, e.g.,~\cite{benjamini2009selective,chi2010multiple}). In this oracle setting, the $K$ agents completely pool their raw local datasets.  

\smallskip
\begin{remark}
Pooling raw data across agents requires a central server, which imposes prohibitive communication bandwidth costs and violates decentralized data privacy constraints. Also, in real-world exploratory scenarios, one almost never has access to the true prior mixture distribution; agents typically only possess empirical datasets.
\end{remark}
\smallskip

\subsubsection{Baseline 2: Zero Communication}\label{Baseline: Isolated}
A naive approach to decentralized detection is the isolated setting, where agents perform local tests at level $\alpha/K$ without any communication. This prevents the isolated approach from leveraging the full global error budget $\alpha$, causing a loss in statistical power. Furthermore, while this bounds the global error via Lemma~\ref{lem:global_fdr_sum}, it fails to enforce consensus. Because the agents remain blind to other distributions, they may accept points that are only ``locally" normal but anomalous to the rest of the network. Consequently, this approach is fundamentally incompatible with evaluating a global composite null unless all underlying distributions are identical.

\subsubsection{Baseline 3: Limited Communication (FastLSU)}\label{Baseline: AdaFastLSU}
Another baseline is to combine AdaDetect with the Fast Local Step-Up (FastLSU) algorithm~\cite{madar2016fastlsu}, described as follows. Each Agent $j$ applies AdaDetect locally: independently trains a local scoring function $g^{(j)}$ and computes empirical conformal $p$-values $\mathbf{P}^{(j)}$ for its local test sample $X^{(j)}$. The agents then execute FastLSU, which interactively shares scalar rejection counts across the network to iteratively locate the exact global $p$-value rejection threshold of a centralized BH procedure. 

Because FastLSU transmits bounded integer counts rather than continuous model weights or raw data, its communication cost per agent can be strictly quantified. During each iteration, an agent transmits its local rejection count requiring $\lceil \log_2(m + 1) \rceil$ bits, and receives the aggregated global count requiring $\lceil \log_2(K m + 1) \rceil$ bits. In the worst-case scenario, the algorithm must evaluate every unique $p$-value across the network, requiring a maximum of $K m$ iterations. Thus, the theoretical upper bound on the total communication payload per agent is $Km\left( \lceil \log_2(m + 1) \rceil + \lceil \log_2(K m + 1) \rceil \right)$ bits. In practice, however, FastLSU typically converges in very few iterations, resulting in a drastically reduced empirical payload.

\smallskip
\begin{remark}
    It should be noted that this iterative aggregation typically requires a central server. While the requirement for a dedicated central server can be structurally prohibitive in purely decentralized environments, a practical workaround is to designate one of the participating local agents to serve as the central coordinator for the algorithm.
\end{remark}
\smallskip

This approach is effective for the general distributed global FDR control problem, where the sole objective is to bound the overall error rate across a disjoint set of independent local tests. To guarantee the baseline validity of the FastLSU algorithm across heterogeneous distributions in this general setting, we formally establish that the concatenated $p$-value vector is Positive Regression Dependent on a Subset~\cite{madar2016fastlsu}.

\smallskip
\begin{definition}[Positive Regression Dependence on a Subset (PRDS)]
A random vector $\mathbf{X} = (X_1, \dots, X_m) \in \mathbb{R}^m$ is said to be Positive Regression Dependent on a Subset $I_0 \subseteq \{1, \dots, m\}$ if, for any increasing set $D \subseteq \mathbb{R}^m$ and for each index $i \in I_0$, the conditional probability $\mathbb{P}(\mathbf{X} \in D \mid X_i = x)$ is a non-decreasing function of $x$. 
\end{definition}
\smallskip

\begin{lemma}\label{lemma:global_prds}
The concatenated vector of independent $p$-values $\mathbf{P} = (\mathbf{P}^{(1)}, \dots, \mathbf{P}^{(K)})$ satisfies the global PRDS property on the combined subset of true null hypotheses $\mathcal{H}_0 = \cup_{j=1}^K \mathcal{H}_0^{(j)}$.
\end{lemma}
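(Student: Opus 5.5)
The plan is to reduce the global statement to the local PRDS property within each agent and then lift it across agents using independence (Assumption~\ref{assumption:agent_independence}). The local fact I will rely on is the one established for AdaDetect in~\cite{marandon2024adaptive}. Under Assumption~\ref{assumption:exchangeability_invariance} and Assumption~\ref{assumption:no_ties}, the empirical conformal $p$-value vector $\mathbf{P}^{(j)}$ of each agent is PRDS on $\mathcal{H}_0^{(j)}$. This follows from the conditional exchangeability of the scores, which makes conformal $p$-values sharing a calibration set positively dependent. Taking that as given, it remains to show that concatenating independent PRDS blocks gives a vector that is PRDS on the union of the null index sets.

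Fix an increasing set $D \subseteq \mathbb{R}^{\sum_j m_j}$ and a null index $i \in \mathcal{H}_0^{(j)}$ belonging to agent $j$. Write $\mathbf{P} = (\mathbf{P}^{(j)}, \mathbf{P}^{(-j)})$, where $\mathbf{P}^{(-j)}$ collects the other agents' $p$-values. By Assumption~\ref{assumption:agent_independence}, $\mathbf{P}^{(-j)}$ is independent of $\mathbf{P}^{(j)}$, and in particular of $P^{(j)}_i$. Conditioning on $\mathbf{P}^{(-j)}$ therefore gives
\[
\mathbb{P}(\mathbf{P} \in D \mid P^{(j)}_i = x) = \mathbb{E}\Big[ \mathbb{P}\big(\mathbf{P}^{(j)} \in D_{\mathbf{P}^{(-j)}} \mid P^{(j)}_i = x\big) \Big],
\]
where the outer expectation is over the law of $\mathbf{P}^{(-j)}$ and does not depend on $x$. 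Here the section $D_{\mathbf{q}} = \{\mathbf{p} : (\mathbf{p}, \mathbf{q}) \in D\}$ is an increasing subset of $\mathbb{R}^{m_j}$ for every fixed $\mathbf{q}$, because $D$ is increasing. Local PRDS then makes the integrand non-decreasing in $x$ for each fixed $\mathbf{q}$. Integrating against a measure that does not depend on $x$ preserves monotonicity, which gives the global PRDS property at index $i$. Since $i$ was an arbitrary element of $\mathcal{H}_0 = \cup_j \mathcal{H}_0^{(j)}$, the lemma follows.

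The main obstacle is the measure-theoretic care needed in the conditioning step. The $p$-values are discrete conformal ranks, so conditioning on $P^{(j)}_i = x$ is only meaningful on the support, and monotonicity must be read over atoms of positive probability. I would handle this by using the equivalent formulation of PRDS in terms of $\mathbb{P}(\mathbf{P} \in D \mid P^{(j)}_i \le x)$, which is the form actually used in BH proofs. I would also justify exchanging the expectation with the conditioning via independence, using Fubini on the product measure. A secondary concern is that local PRDS for AdaDetect $p$-values needs to be cited precisely from~\cite{marandon2024adaptive}. If only a weaker conditional-exchangeability statement is available there, I would instead prove local PRDS directly, by conditioning on the unordered multiset of calibration and null test scores.
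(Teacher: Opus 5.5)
Your proposal is correct and follows the paper's proof essentially step for step: condition on the other agents' $p$-values, use independence to drop that conditioning, observe that the section $D_{\mathbf{q}}$ is increasing, apply local PRDS, and integrate against a law that does not depend on $x$. The only difference is cosmetic: you handle all $K$ agents at once by lumping them into $\mathbf{P}^{(-j)}$, whereas the paper proves $K=2$ and appeals to induction (note also that the $\le x$ form of PRDS is implied by, not equivalent to, the pointwise form in the paper's definition, though your lifting argument works verbatim for either).
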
 
\smallskip

A detailed proof for $K=2$ is provided in Appendix~\ref{sec:globalPRDSproof}. By induction, the concatenated vector $\mathbf{P}$ remains PRDS for any $K \in \mathbb{N}$ independent agents.

While Baseline 3 controls the global FDR at $\alpha$ without data sharing, it is insufficient when applied specifically to the multiple hypothesis testing problem under a composite null hypothesis. In this setting, global FDR control is not guaranteed to hold for the FastLSU framework unless the underlying null distributions are identical. 

Due to the lack of information shared, FastLSU only manages to synchronize the global rejection threshold, leaving the non-conformity scores to be evaluated by isolated local models. If the distributions are heterogeneous ($P_0^{(i)} \neq P_0^{(j)}$ for $i\neq j$), Agent~$i$'s model $g^{(i)}$ solely measures conformity to $P_0^{(i)}$. Consequently, Agent~$i$ will falsely accept test points that belong to its local distribution but lie entirely outside the composite null distribution. Because local models lack the domain knowledge to evaluate external distributions, the only scenario where local conformity guarantees global consensus is when all null distributions are identical.


\section{Methods} 

To address the multiple hypothesis global FDR control problem over heterogeneous composite nulls, we propose a decentralized framework based on data-splitting, quantized model exchange, and FastLSU. Recall that each agent has null training samples $Y^{(j)} = (Y^{(j)}_1, \dots, Y^{(j)}_n)$ and 
test samples $X^{(j)} = (X^{(j)}_1, \dots, X^{(j)}_m)$. Recall that we are working with a quite general setting, especially the novelties are allowed to have \emph{arbitrary dependency among themselves at each agent}. To handle this challenging setting, we need the following block-wide data independence for our proposed methodology. 

\smallskip
\begin{assumption}[Block-wise Data Independence]
\label{assumption:block_independence}
Each agent can split its data into $K$ \emph{independent} datasets each consisting of $1/K$ of the training samples and $1/K$ of the test samples.
\end{assumption}
\smallskip

For simplicity of presentation, we assume that both $n/K$ and $m/K$ are integers. It is worth noting that this assumption is automatically satisfied when a much stronger assumption holds, that is, when all the data samples are mutually independent (this is indeed needed for the power analysis in~\cite[Assumption~4]{marandon2024adaptive}). 

\begin{remark}
\label{remark:sparse}
The primary limitation of the proposed framework is the requirement to split data into $K$ blocks. In sparse scenarios or scenarios where one cannot invoke Assumption~\ref{assumption:block_independence}, splitting the data $K$ ways may leave the learning algorithm $\mathcal{A}$ with insufficient samples to accurately estimate the null distribution boundaries, leading to degraded statistical power. In such data-sparse scenarios, agents should use the conservative approach outlined in Appendix~\ref{sec:sparse_regime}, termed as the \emph{Conservative Model Exchange regime}, which utilizes $100\%$ of the local data but incurs a loss in power due to the $\alpha/K$ level.
\end{remark}

\subsection{Step 1: Block-wise Data Splitting}
To leverage the full global error budget $\alpha$ without pooling data, the network must rely on a decentralized thresholding procedure (detailed in Step 4). However, the theoretical validity of this procedure  requires the final evaluation $p$-values across all agents to be mutually independent. As formally established in Lemma~\ref{lemma:global_prds}, this independence is the mathematical prerequisite for concatenating local $p$-values while preserving the necessary dependency structure for global FDR control.

To guarantee this independence without sacrificing network consensus, agents split their data. Each agent $j \in \{1, \dots, K\}$ divides its local PU dataset $H^{(j)}$ into $K$ disjoint blocks. One block is reserved for Agent $j$'s own local calibration and final test evaluation. The remaining $K-1$ blocks are designated for training surrogate models for the other agents in the network. 

\subsection{Step 2: Composite Score Maximization with Quantization}
\label{sec:q_model}

In a centralized setting, testing a global composite null hypothesis often relies on calculating the worst-case significance level across all constituent null distributions. This is equivalent to taking the minimum $p$-value across all local tests; applying the BH procedure to these minimum $p$-values is a well-established method proven to control the global FDR~\cite{benjamini2009selective}. However, aggregating these exact global $p$-values traditionally requires the pooling of raw data across all agents.

To achieve this consensus without transmitting raw data, we propose exchanging the learned score functions themselves. Because higher non-conformity scores map directly to lower $p$-values, taking the maximum non-conformity score across all shared models serves as a privacy-preserving surrogate for evaluating the minimum $p$-value. 

Formally, let $H^{(a)}$ denote the block sent from agent $a$ to agent $j$; to simplify notation, we write $H^{(a)}$ instead of $H^{(a\to j)}$ and in the proof we explicitly label the blocks. Each agent $j$ evaluates its local observations against the set of all shared datasets to construct a composite score for any arbitrary $z \in \mathcal{Z}$:
\begin{equation}\label{non-quantized-score}
    S^{(j)}(z) = \max_{a \in \{1,\dots,K\}} \left\{ g^{(a)}(z; H^{(a)}) \right\}.
\end{equation}
Accordingly, we define the non-conformity scores for the evaluated observations as $S^{(j)}_i = S^{(j)}(Z_i^{(j)})$. The composite null scores and novelty scores are explicitly denoted as $S_{U,i}^{(j)}$ and $S_{V,m}^{(j)}$, respectively.

By evaluating data against all functions simultaneously, the agent flags anomalies that are considered anomalous by any member of the network. For each test point $X_t^{(j)} := Z_{n+t}^{(j)}$, the empirical $p$-value is calculated by comparing its score against the $\ell$ local calibration scores~\cite{marandon2024adaptive}:
\begin{equation}\label{empirical-p-value}
    p_t^{(j)} = \frac{1}{\ell + 1} \left( 1 + \sum_{i=k+1}^n \mathds{1}_{\left\{ S_i^{(j)} \ge S_{n+t}^{(j)} \right\}} \right), \quad t \in \{1, \dots, m\}.
\end{equation}

\noindent{\bf Quantization.} To alleviate the communication burden of transmitting high-resolution datasets across the network and to preserve data privacy, agents exchange a quantized surrogate of their learned model parameters. Let $\mathcal{A}: \mathcal{H} \to \Theta$ be a fixed, deterministic learning algorithm mapping the dataset $H^{(a )}$ to a set of continuous model parameters $\theta^{(a )}$. Let $q: \Theta \to \hat{\Theta}$ be a fixed, deterministic, quantization function mapping these parameters to a discrete, lower-resolution space. 

Instead of transmitting $H^{(a )}$, i.e., weights $\theta^{(a )}$, Agent $a$ computes and transmits the quantized parameters $\hat{\theta}^{(a )} = q(\mathcal{A}(H^{(a )}))$. The receiving Agent $j$ then utilizes a surrogate score function parameterized by these low-resolution weights:
\begin{equation}\label{surrogatescorefunction}
    F^{(a )}(\cdot) := f\left(\cdot \; ; q(\mathcal{A}(H^{(a )}))\right) = f\left(\cdot \; ; \hat{\theta}^{(a )}\right)
\end{equation}

To construct the quantized composite score for any point $z$, Agent $j$ retains its original score function for its local evaluations, and compares it against the low-fidelity surrogate functions received from all remote agents $a \neq j$:
\begin{equation}\label{quantizedcompositescore}
    \tilde{S}^{(j)}(z) = \max \left\{ g^{(j )}\left(z; H^{(j )}\right), \max_{a \neq j} F^{(a )}\left(z\right) \right\}.
\end{equation}
Consequently, the quantized null and novelty scores are calculated as $\tilde{S}_{U,i}^{(j)} := \tilde{S}^{(j)}(U_i^{(j)})$ and $\tilde{S}_{V,m}^{(j)} := \tilde{S}^{(j)}(V_m^{(j)})$ respectively. 

\subsection{Step 3: Global FDR Control via FastLSU}
Because the models Agent $j$ received were trained on data blocks disjoint from its evaluation block, the resulting empirical $p$-values across the network are mutually independent. 

The agents execute the decentralized Fast Local Step-Up (FastLSU) algorithm~\cite{madar2016fastlsu}. Instead of pooling $p$-values, the network interactively shares scalar rejection counts across the network to iteratively locate the exact global rejection threshold of a centralized BH procedure, guaranteeing global FDR control at target level $\alpha$.

Crucially, the validity of this global FDR guarantee requires that the underlying empirical $p$-values satisfy the PRDS property. As established in Section~\ref{Baseline: AdaFastLSU} (Lemma~\ref{lemma:global_prds}), the concatenated global vector of independent, locally PRDS $p$-values rigorously satisfies this requirement.

\subsection{Theoretical Guarantees}
To establish that our proposed framework guarantees global FDR control at level $\alpha$, we must prove two sequential properties. First, we must demonstrate that aggregating independent local tests via the FastLSU protocol preserves the PRDS property required by the BH procedure. As established in Section~\ref{sec:baselines}, the concatenated global vector of independent, locally PRDS $p$-values satisfies this requirement (Lemma~\ref{lemma:global_prds}). Second, we must prove that our method of computing composite non-conformity scores achieves conditional exchangeability, which ensures that the resulting empirical $p$-values locally satisfy the PRDS property.

The foundational methodology of our approach relies on evaluating the conformity of local test points against a reference set of known nulls. In the simple null problem, AdaDetect achieves valid FDR control by ensuring that the constructed non-conformity scores are conditionally exchangeable. By constructing a PU dataset to train a scoring function, the framework calculates valid empirical $p$-values, which are then passed into the standard BH procedure. The fundamental guarantee of this framework is formalized as follows.

\smallskip
\begin{theorem}[Theorem 3.3 from~\cite{marandon2024adaptive}]
\label{thm:adadetect_prds}
If the sequence of null scores is exchangeable conditionally on the alternative scores, and if there are almost surely no ties among the scores, then the resulting empirical $p$-values satisfy the PRDS property on the subset of true null hypotheses, and thus the FDR control.
\end{theorem}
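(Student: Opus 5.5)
This is Theorem 3.3 of~\cite{marandon2024adaptive}, restated; I would reproduce its argument in the present notation, reducing to the classical conformal PRDS argument. Write the calibration null scores as $S_{k+1},\dots,S_n$ and the test scores as $S_{n+1},\dots,S_{n+m}$. Let $\mathcal{H}_0$ index the null test points, and let $W=(S_{k+1},\dots,S_n,S_{n+i},\, i\in\mathcal{H}_0)$ be the vector of the $\ell+m_0$ null scores. The key device is to condition on the alternative scores and on the unordered multiset $\{W\}$ of null score values. By the exchangeability hypothesis and the no-ties hypothesis, conditionally on these the vector $W$ is a uniformly random permutation of $\ell+m_0$ distinct fixed values. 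The alternative scores are then constants.

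\textbf{Step 1: the rank structure.} Condition as above. For $i\in\mathcal{H}_0$, the $p$-value $p_i=(1+\#\{\text{calibration scores}\ge S_{n+i}\})/(\ell+1)$ is a function of the ranks of the test null scores among the null scores. Each alternative $p$-value $p_j$ is a nonincreasing function of the calibration scores only. Marginally, $(\ell+1)p_i$ is uniform on $\{1,\dots,\ell+1\}$, which gives super-uniformity.

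\textbf{Step 2: PRDS.} Fix $i\in\mathcal{H}_0$ and an increasing set $D$. I need $x\mapsto \mathbb{P}(\mathbf{P}\in D\mid p_i=x)$ to be nondecreasing. Following the proof of Bates et al.\ that conformal $p$-values are PRDS, I would construct a monotone coupling. Conditioning on $p_i=r/(\ell+1)$ fixes the position of $S_{n+i}$ in the calibration order statistics. Moving from $r$ to $r+1$ amounts to swapping $S_{n+i}$ with one adjacent calibration score. The remaining null scores stay a uniform permutation, so there is a bijection between the two conditional events. Under this bijection, the calibration empirical distribution changes by replacing one value with a larger one. That change weakly increases every other $p$-value coordinatewise, for nulls and alternatives alike. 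Since $D$ is increasing, the conditional probability is nondecreasing in $r$. Integrating over the conditioning variables preserves monotonicity.

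\textbf{Step 3: FDR.} PRDS together with super-uniform null $p$-values gives FDR control for BH at level $\alpha m_0/m\le\alpha$, by Benjamini--Yekutieli~\cite{benjamini2001control}.

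The main obstacle is the coupling in Step 2. When the alternative scores are conditioned rather than independent, I must check that swapping a test null with a calibration score keeps the joint law of the other null ranks uniform. I also need that the swap moves every other $p$-value in the same direction. I would verify both by explicit counting of permutations, which is where the no-ties assumption is used.
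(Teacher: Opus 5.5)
Your argument is correct. The paper has no proof of its own for this statement; it imports the result from AdaDetect~\cite{marandon2024adaptive}, so there is nothing internal to compare against. Your reconstruction is the standard route to this result, and it goes through. You condition on the alternative scores and on the unordered null values, so that the null vector is a uniform permutation of distinct constants. You then run the monotone swap coupling of Bates et al.~\cite{bates2023testing}.

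A few things to tighten.

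In Step~1 you call each alternative $p$-value a nonincreasing function of the calibration scores. It is nondecreasing: raising a calibration score can only increase $\#\{\text{calibration scores}\ge S_{n+j}\}$. That is also the direction Step~2 actually uses.

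The sentence ``integrating over the conditioning variables preserves monotonicity'' needs its reason stated. Write $G=(S_V,\{W\})$. Given $G$, $(\ell+1)p_i$ is uniform on $\{1,\dots,\ell+1\}$ whatever $G$ is, so $p_i$ is independent of $G$. Hence $\mathbb{P}(\mathbf{P}\in D\mid p_i=x)=\mathbb{E}_G\bigl[\mathbb{P}(\mathbf{P}\in D\mid p_i=x,G)\bigr]$, with a mixing law that does not depend on $x$. Without this independence, a mixture of nondecreasing functions with $x$-dependent weights need not be nondecreasing.

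The ``main obstacle'' you anticipate is already handled by your conditioning. Given $G$, the alternative scores are constants and the assignment of null values to null positions is uniform over all permutations. Swapping $S_{n+i}$ with the largest calibration value below it is an explicit bijection from $\{p_i=r/(\ell+1)\}$ onto $\{p_i=(r+1)/(\ell+1)\}$; its inverse swaps with the smallest calibration value above. So it carries one conditional uniform law onto the other. Every $p_t$ with $t\neq i$ weakly increases, because $S_{n+t}$ is untouched while one calibration value is replaced by a strictly larger one. No further counting is required. The no-ties assumption enters only to make the conditional law a uniform permutation of distinct values, the swap well defined, and the rank of $S_{n+i}$ exactly uniform.

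For Step~3, use the form of the PRDS bound for BH that allows discrete super-uniform null $p$-values. Your pointwise PRDS (conditioning on $p_i=x$) implies that $x\mapsto\mathbb{P}(\mathbf{P}\in D\mid p_i\le x)$ is nondecreasing. That is what the usual telescoping argument needs, and it gives $\mathrm{FDR}\le\alpha m_0/m$.
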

\smallskip

We are now ready to present our main theoretical results for the composite null setting, under Assumption~\ref{assumption:agent_independence}, Assumption~\ref{assumption:block_independence}, Assumption~\ref{assumption:exchangeability_invariance}, and Assumption~\ref{assumption:no_ties}. 

\smallskip
\begin{lemma}[Conditional Exchangeability of Composite Scores]
\label{lem:composite_exchangeability}
Under the stated multi-agent setting in~\eqref{non-quantized-score}, the sequence of Agent $j$'s null scores $S_U^{(j)}$ is exchangeable conditional on the alternative scores.
\end{lemma}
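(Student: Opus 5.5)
The plan is to reduce the lemma to the single-agent AdaDetect exchangeability argument by conditioning on the remote blocks. Write the composite score in \eqref{non-quantized-score} as
\[
S^{(j)}(z) = \max\Big\{ g^{(j)}(z; H^{(j)}),\ \max_{a \neq j} g^{(a)}(z; H^{(a\to j)}) \Big\},
\]
where $H^{(j)}$ is Agent $j$'s own evaluation block. Each $H^{(a\to j)}$ with $a\neq j$ is a training block held by Agent $a$ and is disjoint from Agent $j$'s data. Let $\mathcal{R}_j := (H^{(a\to j)})_{a\neq j}$ collect these remote blocks. By Assumption~\ref{assumption:agent_independence} (and Assumption~\ref{assumption:block_independence} for the block held locally), $\mathcal{R}_j$ is independent of Agent $j$'s evaluation block, that is, of $(U^{(j)}, V^{(j)})$ restricted to that block.

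I would therefore condition on $\mathcal{R}_j$. Because $\mathcal{A}$ and the learning maps are deterministic, every remote function $z\mapsto g^{(a)}(z;H^{(a\to j)})$ becomes a fixed measurable function $h_a$. The conditional law of $(U^{(j)},V^{(j)})$ is unchanged, so Assumption~\ref{assumption:exchangeability_invariance}(i) still holds conditionally.

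Next I would show that, conditionally on $\mathcal{R}_j$, the map $z\mapsto S^{(j)}(z)$ enjoys the invariance of Assumption~\ref{assumption:exchangeability_invariance}(ii). The local component $g^{(j)}(\cdot;H^{(j)})$ is unchanged under permutations of $\{Z^{(j)}_{k+1},\dots,Z^{(j)}_{n+m}\}$ with the first $k$ entries fixed. The remote components $h_a$ do not depend on Agent $j$'s data at all. Hence the pointwise maximum is a function $\Phi(z; Z^{(j)}_{1:k}, \{Z^{(j)}_{k+1:n+m}\}, \mathcal{R}_j)$ that depends on the last $n+m-k$ observations only through their unordered set.

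From here I would replay the argument of \cite[Lemma 3.2]{marandon2024adaptive}. Take any permutation $\sigma$ of the calibration nulls and test nulls, with the training part and $V^{(j)}$ fixed. Assumption~\ref{assumption:exchangeability_invariance}(i), conditional on $V^{(j)}$ and $\mathcal{R}_j$, gives $(U^{(j)}_{\sigma}, V^{(j)}, \mathcal{R}_j) \overset{d}{=} (U^{(j)}, V^{(j)}, \mathcal{R}_j)$. Applying the invariant map $\Phi$ coordinatewise then yields
\[
(S^{(j)}_{U,\sigma(i)})_i, (S^{(j)}_{V,m})_m \overset{d}{=} (S^{(j)}_{U,i})_i, (S^{(j)}_{V,m})_m,
\]
because the score function itself is unchanged by $\sigma$. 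This is exchangeability of the null scores conditional on the novelty scores (and on $\mathcal{R}_j$). Integrating out $\mathcal{R}_j$ preserves conditional exchangeability given the alternative scores; alternatively, one can simply keep the conditioning, which is all Theorem~\ref{thm:adadetect_prds} needs.

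The main obstacle is the independence step. Its validity hinges on the remote models being trained on blocks disjoint from, and independent of, Agent $j$'s evaluation block, which is exactly what Step 1 and Assumption~\ref{assumption:block_independence} are designed to give. If any remote block overlapped Agent $j$'s evaluation block, the remote score functions would depend on the ordered null data and the invariance would fail. I would therefore state carefully that the $H^{(a\to j)}$ are the blocks generated by the other agents, independent of Agent $j$'s evaluation block. A secondary subtlety is that only the nulls in the calibration and test positions are permuted, not the $k$ training nulls. I would handle this, as AdaDetect does, by treating the first $k$ points as fixed and applying exchangeability to the remaining null subsequence.
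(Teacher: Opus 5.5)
Your proposal is correct and follows essentially the same route as the paper's proof. Independence of the remote blocks from Agent $j$'s evaluation block (Assumption~\ref{assumption:agent_independence}) extends the null exchangeability of Assumption~\ref{assumption:exchangeability_invariance}(i) to a joint statement that includes the remote data; the permutation-invariance of the training sets (local $H^{(j)}$ and data-independent remote models) leaves the max-score function unchanged under $\pi$; applying this deterministic map then gives exchangeability given the novelty scores. The only difference is cosmetic: you condition on $\mathcal{R}_j$ and integrate it out, whereas the paper carries $(U^{(2)},V^{(2)})$ in the joint law, marginalizes, conditions on $(V^{(1)},H^{(1)},H^{(2)})$, and coarsens to $S_V^{(1)}$.
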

\smallskip

\begin{IEEEproof}
The proof for the two-agent case ($K=2$) is provided in Appendix~\ref{sec:compositePRDSproof}. Due to the mutual independence of the local datasets (Assumption~\ref{assumption:agent_independence}), this proof extends to arbitrary $K \in \mathbb{N}.$
\end{IEEEproof}
\smallskip

\begin{theorem}[Global FDR Control: Non-Quantized Model Exchange]
\label{theorem:local_fdr}
Across all agents, the concatenation of all local empirical $p$-values, as defined in~\eqref{empirical-p-value}, satisfies the PRDS property on $\mathcal{H}_0$, guaranteeing global FDR control at the target level $\alpha$ via the FastLSU procedure.
\end{theorem}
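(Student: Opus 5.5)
The argument chains the results already in place. First we get local PRDS at each agent, then global PRDS via independence across agents, and finally FDR control via FastLSU.

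\emph{Step 1 (local PRDS).} Fix an agent $j$. By Lemma~\ref{lem:composite_exchangeability}, the composite null scores $S_U^{(j)} = (S^{(j)}(U^{(j)}_i))_i$ from \eqref{non-quantized-score} are exchangeable conditionally on the novelty scores $S_V^{(j)}$. Assumption~\ref{assumption:no_ties} says these composite scores have almost surely no ties. Both hypotheses of Theorem~\ref{thm:adadetect_prds} therefore hold for agent $j$'s calibration and test scores. That theorem then shows the local empirical $p$-value vector $\mathbf{P}^{(j)}=(p^{(j)}_1,\dots,p^{(j)}_m)$ of \eqref{empirical-p-value} is PRDS on $\mathcal{H}_0^{(j)}$.

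One point needs care here. Under the composite null, the true nulls in $\mathcal{H}_0^{(j)}$ are the test points drawn from $P_0^{(j)}$, matching the calibration sample. Exchangeability of the calibration scores with the null test scores is therefore the right notion. The remote functions $g^{(a)}(\cdot;H^{(a)})$ are trained on blocks disjoint from agent $j$'s evaluation block. Conditionally on those blocks, they are fixed deterministic maps, so taking the maximum with them cannot break exchangeability. This is exactly the content of Lemma~\ref{lem:composite_exchangeability}, which we invoke rather than reprove.

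\emph{Step 2 (independence across agents).} Next we argue that the vectors $\mathbf{P}^{(1)},\dots,\mathbf{P}^{(K)}$ are mutually independent. $\mathbf{P}^{(j)}$ is a measurable function of agent $j$'s own evaluation block and of the blocks $H^{(a\to j)}$ sent to it. By Assumption~\ref{assumption:block_independence}, each agent's blocks are independent of one another. By Assumption~\ref{assumption:agent_independence}, different agents' data are independent. The collections of blocks feeding different $\mathbf{P}^{(j)}$ are disjoint: block $H^{(a\to j)}$ is used only by agent $j$, and agent $j$'s own evaluation block only by itself. Hence the $\mathbf{P}^{(j)}$ are functions of disjoint independent families and are independent.

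I expect this bookkeeping to be the main obstacle. It must be checked that no block is reused, in particular that agent $j$'s own local model $g^{(j)}$ is trained only on its evaluation block's training portion. If any overlap existed, the independence needed in Step 3 would fail.

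\emph{Step 3 (global PRDS and FDR).} With independence and local PRDS established, Lemma~\ref{lemma:global_prds} gives that the concatenation $\mathbf{P}=(\mathbf{P}^{(1)},\dots,\mathbf{P}^{(K)})$ is PRDS on $\mathcal{H}_0=\cup_j\mathcal{H}_0^{(j)}$. That lemma is proved for $K=2$ and extended by induction, applied here with the grouping $(\mathbf{P}^{(1)},\dots,\mathbf{P}^{(K-1)})$ versus $\mathbf{P}^{(K)}$, which are independent by Step 2. FastLSU~\cite{madar2016fastlsu} returns exactly the rejection set of the centralized BH procedure applied to $\mathbf{P}$. BH at level $\alpha$ controls the FDR under PRDS on the null set, so $\text{FDR}_{\text{global}}\le \alpha$, which completes the argument.
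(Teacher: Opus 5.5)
Your proposal is correct and follows the paper's proof step for step: Lemma~\ref{lem:composite_exchangeability} with Theorem~\ref{thm:adadetect_prds} gives local PRDS, the disjoint block families give mutual independence of the per-agent $p$-value vectors, Lemma~\ref{lemma:global_prds} gives global PRDS, and FastLSU reproduces BH. Your Step~2 is more careful than the paper's one-line appeal to disjointness, since you correctly combine it with Assumptions~\ref{assumption:agent_independence} and~\ref{assumption:block_independence}; one small correction is that $g^{(j)}$ is trained on the whole PU dataset $H^{(j)}$ of agent $j$'s evaluation block, not only its training portion, which is harmless because independence only requires that it stay inside that block.
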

\smallskip

\begin{IEEEproof}
By Lemma~\ref{lem:composite_exchangeability}, the composite scores are conditionally exchangeable. This guarantees the PRDS property of the local empirical $p$-values via Theorem~\ref{thm:adadetect_prds}. Because the exchanged unquantized models are trained on data blocks disjoint from each agent's block, the resulting local $p$-value vectors are mutually independent. By Lemma~\ref{lemma:global_prds}, concatenating these independent, locally PRDS $p$-value vectors yields a global vector that satisfies the global PRDS property on $\mathcal{H}_0$. This fulfills the theoretical requirement for the FastLSU procedure to bound the global FDR at level $\alpha$.
\end{IEEEproof}

\smallskip
\begin{theorem}[Global FDR Control: Quantized Model Exchange]
\label{theorem:quantized_fdr}
Given $(\mathcal{A},q)$ defined in Section~\ref{sec:q_model}, the composite scores defined in~\eqref{quantizedcompositescore} satisfy conditional exchangeability, guaranteeing global FDR control at level $\alpha$.
\end{theorem}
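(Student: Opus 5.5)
The plan is to reduce the quantized case to the non-quantized argument behind Lemma~\ref{lem:composite_exchangeability} and Theorem~\ref{theorem:local_fdr}. The key observation is that quantization is a fixed deterministic post-processing of a function of data that is independent of Agent~$j$'s evaluation block.

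\textbf{Step 1: the surrogates are independent of Agent $j$'s block.} For each remote agent $a\neq j$, the transmitted parameter $\hat\theta^{(a)} = q(\mathcal{A}(H^{(a\to j)}))$ is a measurable function of the block $H^{(a\to j)}$ alone, since $\mathcal{A}$ and $q$ are fixed and deterministic. By Assumption~\ref{assumption:agent_independence} and Assumption~\ref{assumption:block_independence}, the collection $\{H^{(a\to j)}\}_{a\neq j}$ is independent of Agent $j$'s own evaluation block $H^{(j)}$, that is, of $(U^{(j)},V^{(j)})$. Hence the random function
\begin{equation*}
\Phi(z):=\max_{a\neq j}F^{(a)}(z)
\end{equation*}
is independent of $(U^{(j)},V^{(j)})$.

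\textbf{Step 2: conditional exchangeability.} I would condition on $\Phi$, i.e.\ on $\{\hat\theta^{(a)}\}_{a\neq j}$. By independence, conditioning leaves the joint law of $(U^{(j)},V^{(j)})$ unchanged. Consequently Assumption~\ref{assumption:exchangeability_invariance}(i) still holds, and $\Phi$ becomes a fixed deterministic function. The quantized composite score is
\begin{equation*}
\tilde S^{(j)}(z)=\max\{g^{(j)}(z;H^{(j)}),\Phi(z)\}.
\end{equation*}
It inherits the invariance of Assumption~\ref{assumption:exchangeability_invariance}(ii): it depends on the evaluation points only through $g^{(j)}$, which is invariant to reorderings of $\{Z^{(j)}_{k+1},\dots,Z^{(j)}_{n+m}\}$ with the first $k$ entries fixed. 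Taking the pointwise maximum with a fixed function preserves this invariance. I would then repeat, verbatim, the argument of~\cite[Lemma 3.2]{marandon2024adaptive} as used in Appendix~\ref{sec:compositePRDSproof}. For any permutation $\pi$ of the calibration and null-test indices, the vector $(\tilde S^{(j)}_{U,\pi(i)})_i$ together with $\tilde S^{(j)}_V$ is obtained by applying the same map to the permuted data, which has the same conditional law. This gives exchangeability of $\tilde S^{(j)}_U$ conditionally on $\tilde S^{(j)}_V$ and $\Phi$, and integrating out $\Phi$ gives the claim.

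\textbf{Step 3: FDR.} With Assumption~\ref{assumption:no_ties} covering the quantized scores, Theorem~\ref{thm:adadetect_prds} yields local PRDS of the $p$-values~\eqref{empirical-p-value} computed from $\tilde S^{(j)}$. For independence across agents, note that each agent's $p$-vector is a function of its own evaluation block and of the blocks sent to it. These are pairwise disjoint and mutually independent across receiving agents by Assumption~\ref{assumption:block_independence}, so the $K$ local $p$-vectors are independent. Lemma~\ref{lemma:global_prds} then gives global PRDS on $\mathcal{H}_0$, and FastLSU reproduces BH, yielding $\text{FDR}_{\text{global}}\le\alpha$.

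The main obstacle I expect is Step~2's bookkeeping. One must make precise that conditioning on the quantized remote parameters does not disturb the conditional exchangeability structure, in particular the conditioning on the novelties $V^{(j)}$ that may be arbitrarily dependent. This is handled by independence of disjoint blocks and by the fact that $q$ is deterministic, so no extra randomness couples to $H^{(j)}$. A secondary subtlety is ties: quantization can create plateaus in $F^{(a)}$, so I would rely explicitly on Assumption~\ref{assumption:no_ties} for the quantized setting rather than derive it.
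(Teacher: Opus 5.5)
Your proposal is correct and takes essentially the same route as the paper's proof. Both arguments run as follows: the quantized surrogates are independent of Agent $j$'s evaluation block because $q\circ\mathcal{A}$ is measurable; conditioning on them reduces to the exchangeability argument of Lemma~\ref{lem:composite_exchangeability}; then Theorem~\ref{thm:adadetect_prds}, Lemma~\ref{lemma:global_prds} and FastLSU give global control. Your cross-agent independence bookkeeping is somewhat more explicit than the paper's one-line appeal to mutual independence of the blocks: you note that each agent's $p$-vector depends only on the blocks addressed to it, and these are disjoint across receivers.
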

\smallskip

Due to space limitations, we defer the proof to Appendix~\ref{sec:quantized_proof}.


\section{Experiments}
\label{sec:experiments}

To empirically validate the theoretical guarantees of our  framework, we conduct a series of experiments comparing the Zero Communication baseline (Baseline 2 or B2), the Limited Communication baseline (Baseline 3 or B3), and our proposed model exchange (ME) framework. 

\subsection{Experimental Setup}
We simulate decentralized networks of $K =3$ agents, each training a local Random Forest (RF) classifier acting as the scoring function. The global target FDR is set to $\alpha = 0.1$. Performance is evaluated across 100 independent trials.

\subsubsection{Simulated Data}
We generate a $d=20$ dimensional Gaussian dataset with a sparse anomaly signature. The global dataset consists of $4000$ total samples, comprising $3000$ training observations and $1000$ testing observations. The ratio of true nulls to total observations is fixed at $0.9$ ($\pi_0 = 0.9$). This global dataset is evenly split across the $K=3$ agents. The local null distributions for the agents are arranged in a centered equilateral triangle in $\mathbb{R}^d$. Let $u$ and $v$ be orthogonal unit vectors where $u$ is supported on the first 10 coordinates ($u_i = 1/\sqrt{10}$ for $1 \le i \le 10$) and $v$ is supported on the last 10 coordinates ($v_i = 1/\sqrt{10}$ for $11 \le i \le 20$). The local nulls follow $X_{\text{null}}^{(k)} \sim \mathcal{N}(\mu_k, I_d)$, with centroids placed at distance $\delta$ from the origin: $\mu_1 = \delta u$, $\mu_2 = \delta(-u/2 + v\sqrt{3}/2)$, and $\mu_3 = \delta(-u/2 - v\sqrt{3}/2)$. We sweep the shift parameter $\delta \in \{0, 0.5, 1.0, 2.0, 3.0, 4.0\}$. The novelties remain globally stationary and follow $X_{\text{alt}} \sim \mathcal{N}(\mu_{\text{alt}}, I_d)$, featuring a sparse mean shift on the first 5 coordinates where $\mu_{\text{alt}, i} = \sqrt{2 \log(d)}$, and $0$ elsewhere.

\subsection{Synthetic Network Performance}
Tables \ref{tab:shift_sweep_power} and \ref{tab:shift_sweep_fdr} detail performance across increasing distribution shifts. Both the B2 and B3 baselines rely strictly on independent local evaluations. While increasing $\delta$ generally caused a decline in statistical power in B2 and B3 (e.g., from 0.901 down to 0.748 for B2, and from 0.949 down to 0.849 for B3), the ME framework maintained near-perfect power (above $0.99$ for $\delta \ge 3.0$) and controlled global FDR. 

\vspace{-1em}
\begin{table}[ht]
\centering
\caption{Global FDR: Shift Sweep (Synthetic, $K=3$, $\alpha=0.1$)}
\label{tab:shift_sweep_fdr}
\footnotesize
\setlength{\tabcolsep}{5pt}
\begin{tabular}{c | c c c }
\hline
\textbf{Shift ($\delta$)} & \textbf{B2 FDR} & \textbf{B3 FDR} & \textbf{ME FDR} \\
\hline
0.0 & 0.027 $\pm$ 0.01 & 0.086 $\pm$ 0.02 & 0.081 $\pm$ 0.03 \\
0.5 & 0.029 $\pm$ 0.01 & 0.088 $\pm$ 0.02 & 0.084 $\pm$ 0.03 \\
1.0 & 0.028 $\pm$ 0.01 & 0.091 $\pm$ 0.02 & 0.081 $\pm$ 0.03 \\
2.0 & 0.027 $\pm$ 0.01 & 0.087 $\pm$ 0.02 & 0.082 $\pm$ 0.03 \\
3.0 & 0.028 $\pm$ 0.01 & 0.088 $\pm$ 0.02 & 0.078 $\pm$ 0.03 \\
4.0 & 0.031 $\pm$ 0.01 & 0.090 $\pm$ 0.02 & 0.074 $\pm$ 0.03 \\
\hline
\end{tabular}
\end{table}

\vspace{-1em}
\begin{table}[ht]
\centering
\caption{Statistical Power: Shift Sweep (Synthetic, $K=3$, $\alpha=0.1$)}
\label{tab:shift_sweep_power}
\footnotesize
\setlength{\tabcolsep}{5pt}
\begin{tabular}{c | c c c }
\hline
\textbf{Shift ($\delta$)} & \textbf{B2 Pwr} & \textbf{B3 Pwr} & \textbf{ME Pwr} \\
\hline
0.0 & 0.901 $\pm$ 0.03 & 0.949 $\pm$ 0.02 & 0.966 $\pm$ 0.03 \\
0.5 & 0.894 $\pm$ 0.03 & 0.945 $\pm$ 0.02 & 0.971 $\pm$ 0.02 \\
1.0 & 0.881 $\pm$ 0.04 & 0.940 $\pm$ 0.02 & 0.982 $\pm$ 0.02 \\
2.0 & 0.835 $\pm$ 0.05 & 0.906 $\pm$ 0.03 & 0.991 $\pm$ 0.01 \\
3.0 & 0.763 $\pm$ 0.06 & 0.862 $\pm$ 0.03 & 0.997 $\pm$ 0.01 \\
4.0 & 0.748 $\pm$ 0.06 & 0.849 $\pm$ 0.04 & 0.999 $\pm$ 0.00 \\
\hline
\end{tabular}
\end{table}

\subsection{Communication and Quantization}
Table \ref{tab:quantization_sweep} illustrates the impact of quantization at a fixed shift, contrasting the proposed framework against the baselines. The empirical results validate Theorem~\ref{theorem:quantized_fdr}: quantization of the ME framework preserves conditional exchangeability and bounds global FDR. We observe that our ME framework outperforms Baselines 2 and 3 across all compressions. Furthermore, the 1-bit compression reduces the communication payload by nearly $20\times$ compared to the unquantized ME model (from $1576.9$ kb to $79.4$ kb) with negligible loss in statistical power.

\vspace{-1em}
\begin{table}[ht]
\centering
\caption{Quantization Sweep ($\delta=2.0$)}
\label{tab:quantization_sweep}
\footnotesize
\begin{tabular}{l c c c}
\hline
\textbf{Precision} & \textbf{Global FDR} & \textbf{Global Power} & \textbf{Comm. (kb)} \\
\hline
B2 & 0.029 $\pm$ 0.01 & 0.834 $\pm$ 0.06 & 0.0 $\pm$ 0.00 \\
B3 & 0.090 $\pm$ 0.02 & 0.908 $\pm$ 0.03 & 0.1 $\pm$ 0.01 \\ \cline{1-4}
ME Unquantized & 0.082 $\pm$ 0.03 & 0.990 $\pm$ 0.01 & 1576.9 $\pm$ 37.6 \\
ME 6-bit       & 0.080 $\pm$ 0.03 & 0.990 $\pm$ 0.01 & 199.0 $\pm$ 4.8 \\
ME 4-bit       & 0.078 $\pm$ 0.03 & 0.991 $\pm$ 0.01 & 150.3 $\pm$ 3.5 \\
ME 2-bit       & 0.084 $\pm$ 0.03 & 0.991 $\pm$ 0.01 & 103.3 $\pm$ 2.2 \\
ME 1-bit       & 0.087 $\pm$ 0.04 & 0.991 $\pm$ 0.01 & 79.4 $\pm$ 1.9 \\
\hline
\end{tabular}
\end{table}

\section{Acknowledgment}
This work was supported in part by the National Science Foundation
under Grant CCF-2611415. 
\balance
\newpage
\bibliographystyle{IEEEtran}
\bibliography{references}

\onecolumn
\appendices
\numberwithin{equation}{section} 

\section{Mathematical Definitions}
\label{sec:definitions}

To rigorously establish the theoretical guarantees in the subsequent proofs, we first formalize several foundational concepts regarding vector properties and dependency structures~\cite{benjamini2001control}.

\begin{definition}[Vector Inequality]
For two vectors $\mathbf{x}, \mathbf{y} \in \mathbb{R}^m$, we say that $\mathbf{y}$ is larger than or equal to $\mathbf{x}$, denoted as $\mathbf{x} \le \mathbf{y}$, if the inequality holds element-wise: $x_i \le y_i$ for all $i \in \{1, \dots, m\}$.
\end{definition}

\begin{definition}[Increasing Set]
A set $D \subseteq \mathbb{R}^m$ is called an \textit{increasing set} if, for any $\mathbf{x} \in D$ and any $\mathbf{y} \in \mathbb{R}^m$, the condition $\mathbf{x} \le \mathbf{y}$ implies that $\mathbf{y} \in D$.
\end{definition}

\begin{definition}[Non-Decreasing Function]
A function $f: \mathbb{R} \to \mathbb{R}$ is \textit{non-decreasing} if, for any $x, y \in \mathbb{R}$, $x \le y$ implies $f(x) \le f(y)$.
\end{definition}

\section{Proof of Lemma \ref{lemma:global_prds}}
\label{sec:globalPRDSproof}

\begin{IEEEproof}
To guarantee the validity of the FastLSU algorithm across independent agents, we establish that a concatenated vector of independent, locally PRDS $p$-values satisfies the global PRDS property.

Let $\mathbf{X} \in \mathbb{R}^{m_1}$ and $\mathbf{Y} \in \mathbb{R}^{m_2}$ be two independent random column vectors representing the $p$-values of two distinct agents. Let $I_X$ and $I_Y$ denote their respective subsets of true null hypothesis indices. 

We assume the following:
\begin{itemize}
    \item $\mathbf{X} \perp \!\!\! \perp \mathbf{Y}$.
    \item $\mathbf{X}$ is PRDS on $I_X$.
    \item $\mathbf{Y}$ is PRDS on $I_Y$.
\end{itemize}

We want to show that the concatenated global vector\footnote{For notational convenience throughout the remainder of this proof, we write the column vector $[\mathbf{X}^\top, \mathbf{Y}^\top]^\top$ simply as $(\mathbf{X}, \mathbf{Y})$.} $\mathbf{Z} = [\mathbf{X}^\top, \mathbf{Y}^\top]^\top \in \mathbb{R}^{m_1 + m_2}$ is PRDS on $I_Z = I_X \cup I_Y$.
Without loss of generality, assume the selected true null index belongs to the first block ($i \in I_X$). Thus, $Z_i = X_i$. By the tower property, we express the conditional probability as follows:
\begin{equation}
\mathbb{P}(\mathbf{Z} \in D \mid Z_i = x) = \mathbb{P}((\mathbf{X}, \mathbf{Y}) \in D \mid X_i = x)  = \mathbb{E}_{\mathbf{Y} \mid X_i=x} \left[ \mathbb{P} ( (\mathbf{X}, \mathbf{Y}) \in D \mid X_i = x, \mathbf{Y}) \right]
\end{equation}

Also, ($\mathbf{X} \perp \!\!\! \perp \mathbf{Y}$) implies that $(\mathbf{Y}\mid X_i=x)  \sim \mathbf{Y}$. Therefore, 
\begin{equation}
\mathbb{E}_{\mathbf{Y} \mid X_i=x} \left[ \mathbb{P} ( (\mathbf{X}, \mathbf{Y}) \in D \mid X_i = x, \mathbf{Y}) \right] = \mathbb{E}_\mathbf{Y} \left[\mathbb{P} ((\mathbf{X}, \mathbf{Y}) \in D \mid X_i = x, \mathbf{Y}) \right].
\end{equation}

To evaluate this expectation as an integral over the marginal probability measure $dP_{\mathbf{Y}}(\mathbf{y})$, we evaluate the inner probability at $\mathbf{Y} = \mathbf{y}$. Thus, the event $(\mathbf{X}, \mathbf{y}) \in D$ is equivalent to $\mathbf{X} \in D_{\mathbf{y}}$, where $D_{\mathbf{y}} = \{ \mathbf{x} \in \mathbb{R}^{m_1} : (\mathbf{x}, \mathbf{y}) \in D \}$ is the cross-section of $D$ at $\mathbf{y}$. 

Because ($\mathbf{X} \perp \!\!\! \perp \mathbf{Y}$), the conditional probability of $\mathbf{X}$ falling into the deterministic set $D_{\mathbf{y}}$ does not depend on the specific realization of $\mathbf{Y}$ beyond defining $D_{\mathbf{y}}$. Thus, the conditioning on $\mathbf{Y} = \mathbf{y}$ drops out of the inner probability:
\begin{equation}
\mathbb{P}((\mathbf{X}, \mathbf{Y}) \in D \mid X_i = x, \mathbf{Y} = \mathbf{y}) = \mathbb{P}(\mathbf{X} \in D_{\mathbf{y}} \mid X_i = x, \mathbf{Y} = \mathbf{y}) = \mathbb{P}(\mathbf{X} \in D_{\mathbf{y}} \mid X_i = x).
\end{equation}

Integrating this simplified probability over the distribution of $\mathbf{Y}$ yields:
\begin{equation}
\mathbb{E}_\mathbf{Y} \left[\mathbb{P} ((\mathbf{X}, \mathbf{Y}) \in D \mid X_i = x, \mathbf{Y}) \right] = \int_{\mathbb{R}^{m_2}} \mathbb{P}(\mathbf{X} \in D_{\mathbf{y}} \mid X_i = x) dP_{\mathbf{Y}}(\mathbf{y}).
\end{equation}

Let $\mathbf{x} \in D_{\mathbf{y}}$ and $\mathbf{x}' \ge \mathbf{x}$. The concatenated vector satisfies:
\begin{equation}
(\mathbf{x}', \mathbf{y}) \ge (\mathbf{x}, \mathbf{y})
\end{equation}
Since $D$ is an increasing set and $(\mathbf{x}, \mathbf{y}) \in D$, it follows that $(\mathbf{x}', \mathbf{y}) \in D$ and $\mathbf{x}' \in D_{\mathbf{y}}$. Thus,  $D_{\mathbf{y}}$ is increasing. Because $D_{\mathbf{y}}$ is an increasing set and $i \in I_X$, for any fixed $\mathbf{y}$, the $\mathbb{P}(\mathbf{X} \in D_{\mathbf{y}} \mid X_i = x)$ is non-decreasing in $x$ due to $\mathbf{X}$ being PRDS in $D_\mathbf{y}$.

For $x_1 \le x_2$, integrating over the non-negative probability measure $dP_\mathbf{Y}(\mathbf{y})$ yields:
\begin{equation}
\int_{\mathbb{R}^{m_2}} \mathbb{P}(\mathbf{X} \in D_{\mathbf{y}} \mid X_i = x_1) dP_{\mathbf{Y}}(\mathbf{y}) \le \int_{\mathbb{R}^{m_2}} \mathbb{P}(\mathbf{X} \in D_{\mathbf{y}} \mid X_i = x_2) dP_{\mathbf{Y}}(\mathbf{y}).
\end{equation}
 
Hence, $\mathbb{P}(\mathbf{Z} \in D \mid Z_i = x)$ is non-decreasing in $x$ for $ i \in I_X$. By symmetry, the same holds for $ i \in I_Y$. Thus, $\mathbf{Z}$ is PRDS on $I_Z$. 
\end{IEEEproof}

\section{Proof of Lemma \ref{lem:composite_exchangeability}}
\label{sec:compositePRDSproof}

\begin{IEEEproof}
We recall the positive-unlabeled (PU) classification data structures defined in Section \ref{section: notation}. In the context of block-wise data splitting, let $U^{(1)}$ and $V^{(1)}$ denote the complete set of true nulls and novelties within Agent 1's held-out evaluation block. Similarly, let $U^{(2)}$ and $V^{(2)}$ denote the true nulls and novelties within the training block that Agent 2 has designated specifically for Agent 1. Let:
\begin{align*}
    H^{(1 )} &= h^{(1 )}(U^{(1)}, V^{(1)}) = \left(Y_{\text{train}}^{(1 )}, \{Y_{\text{cal}}^{(1 )} \cup X_0^{(1 )}, V^{(1)}\}\right); \\
    H^{(2 )} &= h^{(2 )}(U^{(2)}, V^{(2)}) =  \left(Y_{\text{train}}^{(2 )}, \{Y_{\text{cal}}^{(2 )} \cup X_0^{(2 )}, V^{(2)}\}\right); \\
    S^{(1)}_{U, i} &= \max \left\{ g^{(1 )}(U^{(1)}_i; H^{(1 )}), g^{(2 )}(U^{(1)}_i; H^{(2 )}) \right\}; \\
    S^{(1)}_{V, j} &= \max \left\{ g^{(1 )}(V^{(1)}_j; H^{(1 )}), g^{(2 )}(V^{(1)}_j; H^{(2 )}) \right\}.
\end{align*}

While the first argument $Y_{\text{train}}^{(k )}$ represents a fixed sequence of known labels, the second argument in $H^{(1 )}$ and $H^{(2 )}$ is formulated as an unordered set, which is permutation invariant.  Thus, $H^{(1 )}$ and $H^{(2 )}$ are deterministic functions of $(U^{(1)}, V^{(1)})$ and $(U^{(2)}, V^{(2)})$ respectively, and are invariant to any permutations acting on the unlabeled PU mixture.

Let $\pi$ be an arbitrary permutation acting on $U^{(1)}$ such that the indices from $U^{(1)}_{\text{train}}$ are fixed. By Assumption \ref{assumption:exchangeability_invariance}, $U^{(1)} \mid V^{(1)} \sim  U^{(1)}_\pi \mid V^{(1)}$, implying $(U^{(1)}, V^{(1)}) \sim  (U^{(1)}_\pi, V^{(1)})$. By Assumption \ref{assumption:agent_independence}, the datasets from Agent 1 and Agent 2 are mutually independent, which yields:
\begin{equation}
    \left(U^{(1)}, V^{(1)}, U^{(2)}, V^{(2)}\right) \sim \left(U^{(1)}_\pi, V^{(1)}, U^{(2)}, V^{(2)}\right).
\end{equation}

Because $h^{(1 )}$ is invariant to $\pi$ and $H^{(2 )}$ depends entirely on Agent 2's independent data block, both $H^{(1 )}$ and $H^{(2 )}$ are invariant to $\pi$. Hence:
\begin{equation}
    \left(U^{(1)}, V^{(1)}, U^{(2)}, V^{(2)}, H^{(1 )}, H^{(2 )}\right) \sim  \left(U^{(1)}_\pi, V^{(1)}, U^{(2)}, V^{(2)}, H^{(1 )}, H^{(2 )}\right).
\end{equation}

Marginalizing out $U^{(2)}$ and $V^{(2)}$ yields:
\begin{equation}
    \left(U^{(1)}, V^{(1)}, H^{(1 )}, H^{(2 )}\right) \sim  \left(U^{(1)}_\pi, V^{(1)}, H^{(1 )}, H^{(2 )}\right).
\end{equation}

In other words, by conditioning on the remaining random variables:
\begin{equation}
 \left(U^{(1)} \mid V^{(1)}, H^{(1 )}, H^{(2 )}\right) \sim  \left(U^{(1)}_\pi \mid V^{(1)}, H^{(1 )}, H^{(2 )}\right).
\end{equation}

Evaluating the deterministic, element-wise maximum score functions $S_U^{(1)}$ and $S_V^{(1)}$ on these conditionally exchangeable sets preserves the symmetry:
\begin{equation}
\left(S^{(1)}_{U, 1}, \dots, S^{(1)}_{U, |U^{(1)}|}\right) \mid \left(V^{(1)}, H^{(1 )}, H^{(2 )}\right) \sim  \left(S^{(1)}_{U, \pi(1)}, \dots, S^{(1)}_{U, \pi(|U^{(1)}|)}\right) \mid \left(V^{(1)}, H^{(1 )}, H^{(2 )}\right).
\end{equation}

Because the novelty scores $S_V^{(1)}$ are deterministic evaluations over the fixed novelties $V^{(1)}$ and parameters $H^{(1 )}, H^{(2 )}$, conditioning on the full set $(V^{(1)}, V^{(2)}, H^{(1 )}, H^{(2 )})$ is a stronger condition that implies conditioning on $S_V^{(1)}$. Therefore:
\begin{equation}
\left(S^{(1)}_{U, 1}, \dots, S^{(1)}_{U, |U^{(1)}|}\right) \mid S_V^{(1)} \sim  \left(S^{(1)}_{U, \pi(1)}, \dots, S^{(1)}_{U, \pi(|U^{(1)}|)}\right) \mid S_V^{(1)},
\end{equation}
satisfying the conditional exchangeability requirement.
\end{IEEEproof}

\section{Proof of Theorem \ref{theorem:quantized_fdr}}
\label{sec:quantized_proof}

\begin{IEEEproof}
Without loss of generality, we prove the PRDS property for Agent 1's evaluation block in a $K$-agent network. Let $\mathbf{F} = \{F^{(k )}\}_{k=2}^K$ denote the set of surrogate score functions constructed from the quantized weights transmitted to Agent 1 by all other agents. Let $U^{(1)}$ and $V^{(1)}$ represent the data within Agent 1's held-out evaluation block. 

By Assumption~\ref{assumption:agent_independence} and the block-wise data splitting, Agent 1's evaluation block is mutually independent of all the other data blocks used to train the received models. Thus, Agent 1's evaluation data is independent of all the other training datasets:
\begin{equation}
(U^{(1)}, V^{(1)}) \perp\!\!\!\perp \{H^{(k )}\}_{k=2}^K 
\end{equation}

Because the training algorithm $\mathcal{A}$ and the weight quantization function $q$ are both fixed, deterministic, and measurable, their composition $q \circ \mathcal{A}$ is also a measurable transformation. Therefore, the set of functions $\mathbf{F}$ is a measurable transformation of the independent datasets. Independence is preserved under measurable transformations:
\begin{equation}
(U^{(1)}, V^{(1)}) \perp\!\!\!\perp \mathbf{F} 
\end{equation}

Because $\mathbf{F}$ is independent of Agent 1's evaluation data, it is constant with respect to any permutation $\pi$ on $U^{(1)}$. Therefore, conditioning on $\mathbf{F}$ preserves the symmetry established in Lemma~\ref{lem:composite_exchangeability}:
\begin{equation}
 \left(U^{(1)}, \{U^{(k )}\}_{k=2}^K \mid V^{(1)}, \{V^{(k )}\}_{k=2}^K, H^{(1 )}, \mathbf{F}\right) \sim  \left(U^{(1)}_\pi, \{U^{(k )}\}_{k=2}^K \mid V^{(1)}, \{V^{(k )}\}_{k=2}^K, H^{(1 )}, \mathbf{F}\right).
\end{equation}

Marginalizing out the other block observations $\{U^{(k )}, V^{(k )}\}_{k=2}^K$, we apply the deterministic element-wise maximum using Agent 1's local function and the remote low-fidelity surrogate functions to define the quantized composite scores $\tilde{S}^{(1)}$:
\begin{equation}
\tilde{S}^{(1)}_{U, i} = \max \left\{ g^{(1 )}\left(U^{(1)}_i; H^{(1 )}\right), \max_{k \neq 1} F^{(k )}\left(U^{(1)}_i\right) \right\} 
\end{equation}

Because this maximum operation relies solely on the local evaluation PU dataset $H^{(1 )}$ and the independent fixed functions in $\mathbf{F}$, the exchangeability of the resulting composite scores is preserved:
\begin{equation}
\left(\tilde{S}^{(1)}_{U, 1}, \dots, \tilde{S}^{(1)}_{U, |U^{(1)}|}\right) \mid \tilde{S}_V^{(1)} \sim  \left(\tilde{S}^{(1)}_{U, \pi(1)}, \dots, \tilde{S}^{(1)}_{U, \pi(|U^{(1)}|)}\right) \mid \tilde{S}_V^{(1)}.
\end{equation}

By Theorem~\ref{thm:adadetect_prds}, this conditional exchangeability guarantees the PRDS property of Agent 1's empirical $p$-values. By symmetry, this holds for all $j \in \{1, \dots, K\}$. Because the data blocks across all agents are mutually independent, the concatenated global vector of $p$-values satisfies the global PRDS property (Lemma~\ref{lemma:global_prds}), allowing the FastLSU procedure to guarantee global FDR control at level $\alpha$.
\end{IEEEproof}

\section{Conservative Model Exchange for Sparse Data}
\label{sec:sparse_regime}

When local datasets are too sparse to support block-wise splitting without severely degrading model accuracy, agents must utilize the Conservative Model Exchange Regime. 

\subsection{Methods}
In this approach, Agent $j$ utilizes 100\% of its local PU dataset $H^{(j)}$ to train a single model $g^{(j)}$. It transmits this identical model to all $K-1$ other agents. Agent $j$ then evaluates its test points using the composite max score $\tilde{S}^{(j)}(z) = \max \{g^{(j)}(z), \max_{a \neq j} F^{(a)}(z)\}$. 

Because the other models were trained on data that overlaps with the sets used to calculate $p$-values, independence is lost. Consequently, the network cannot utilize FastLSU and must control global FDR by applying the BH procedure locally at level of $\alpha/K$. 

To prove that this local approach bounds the global FDR, we must establish that the unsplit composite scores remain conditionally exchangeable. For any Agent $k$, the complete set of true nulls is $U^{(k)}$ and the true test novelties are $V^{(k)}$. Let $H^{(1)} = h^{(1)}(U^{(1)}, V^{(1)})$ and $H^{(2)} = h^{(2)}(U^{(2)}, V^{(2)})$ be the unsplit local datasets for Agent 1 and Agent 2. The local score functions are trained on these sets, which are invariant to any permutations acting on the unlabeled PU mixture. The rest of the proof follows from that of Lemma~\ref{lem:composite_exchangeability}.  

\begin{lemma}[Conditional Exchangeability of Conservative Composite Scores]
\label{lem:sparse_composite_exchangeability}
Under the unsplit Conservative Model Exchange setting, the sequence of Agent $j$'s null scores $S_U^{(j)}$ is exchangeable conditional on the alternative scores.
\end{lemma}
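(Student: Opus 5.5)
I would follow the template of the proof of Lemma~\ref{lem:composite_exchangeability}, replacing the held-out blocks by the full unsplit datasets. I would argue first for $K=2$ and Agent~1, then extend to general $K$ and general $j$ by symmetry and independence.

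\textbf{Setup.} Write $H^{(1)} = h^{(1)}(U^{(1)},V^{(1)})$ and $H^{(2)} = h^{(2)}(U^{(2)},V^{(2)})$. Here the first coordinate is the ordered training sequence $Y_{\text{train}}$, and the second is the unordered set $\{Y_{\text{cal}}\cup X_0, V\}$. The composite scores are
\[
S^{(1)}_{U,i} = \max\bigl\{g^{(1)}(U^{(1)}_i;H^{(1)}),\, F^{(2)}(U^{(1)}_i)\bigr\},
\qquad F^{(2)} = f\bigl(\cdot\,; q(\mathcal{A}(H^{(2)}))\bigr).
\]
Fix a permutation $\pi$ of $U^{(1)}$ that leaves the training indices unchanged. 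By Assumption~\ref{assumption:exchangeability_invariance}(i), $(U^{(1)},V^{(1)}) \sim (U^{(1)}_\pi,V^{(1)})$. Assumption~\ref{assumption:agent_independence} then lifts this to the joint vector $(U^{(1)},V^{(1)},U^{(2)},V^{(2)})$.

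\textbf{Key steps.} Unlike the block-split case, Agent~1's evaluation points now also appear inside $H^{(1)}$, and they appear only there. So the crux is the invariance of $H^{(1)}$ under $\pi$. This holds because $\pi$ only reorders elements within the unordered second argument while fixing $Y_{\text{train}}$. Assumption~\ref{assumption:exchangeability_invariance}(ii) then says $g^{(1)}(\cdot;H^{(1)})$, as a function, is unchanged. The remote quantities $H^{(2)}$ and $F^{(2)}$ depend only on Agent~2's data, so they are trivially invariant.

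Next, I would augment the joint distributional identity with $(H^{(1)},H^{(2)},F^{(2)})$, since these are measurable functions that $\pi$ leaves fixed. Marginalizing out $(U^{(2)},V^{(2)})$ and conditioning gives
\[
U^{(1)}\mid (V^{(1)},H^{(1)},F^{(2)}) \;\sim\; U^{(1)}_\pi \mid (V^{(1)},H^{(1)},F^{(2)}).
\]
Applying the fixed map $u\mapsto \max\{g^{(1)}(u;H^{(1)}),F^{(2)}(u)\}$ coordinatewise shows that the null scores are exchangeable given this conditioning. The novelty scores $S^{(1)}_V$ are measurable functions of the conditioning variables, so the tower property lets me coarsen the conditioning to $S^{(1)}_V$ alone.

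\textbf{Main obstacle.} The delicate step is justifying conditioning on $H^{(1)}$, which contains the very points being permuted. I would handle it as AdaDetect does: treat $H^{(1)}$ as the pair (ordered training part, unordered multiset). Conditioning on a symmetric function of $U^{(1)}$ does not break exchangeability, because the conditional law is still $\pi$-invariant, with the fixed training coordinates ordered identically on both sides. Ties are excluded by Assumption~\ref{assumption:no_ties}, which keeps the later application of Theorem~\ref{thm:adadetect_prds} valid.

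\textbf{Extension.} For general $K$, I would replace $F^{(2)}$ by $\mathbf F=\{F^{(a)}\}_{a\neq 1}$ and use mutual independence across agents. This gives conditional exchangeability for every $j$. Combined with local BH at level $\alpha/K$ and Lemma~\ref{lem:global_fdr_sum}, it yields the global bound.
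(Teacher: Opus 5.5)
Your proposal is correct and takes essentially the same route as the paper. The paper's proof simply reruns the argument of Lemma~\ref{lem:composite_exchangeability} with the unsplit datasets $H^{(1)}, H^{(2)}$, using the same two ingredients you use: permutation invariance of $h^{(1)}$ (ordered training part, unordered PU mixture), and independence of the remote models from Agent~1's data.

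One small correction to your framing: the block-split case is not different on the point you single out. There too, Agent~1's evaluation points appear inside $H^{(1)}$, because its local model is trained PU-style on the held-out evaluation block. What is genuinely new in the unsplit regime is the loss of independence \emph{across} agents' $p$-value vectors. That loss rules out FastLSU but does not affect the within-agent exchangeability you prove here.
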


\begin{theorem}[Global FDR Control: Conservative Model Exchange]
\label{theorem:sparse_fdr}
Applying the BH procedure locally at level $\alpha/K$ on the empirical $p$-values generated by the unsplit composite scores rigorously bounds the global FDR at level $\alpha$. 
\end{theorem}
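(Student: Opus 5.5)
The plan has three steps. First invoke Lemma~\ref{lem:sparse_composite_exchangeability} to get conditional exchangeability at each agent. Then use Theorem~\ref{thm:adadetect_prds} to obtain local FDR control at level $\alpha/K$. Finally, close with the union-type bound of Lemma~\ref{lem:global_fdr_sum}, which does not require independence across agents.

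\textbf{Step 1: exchangeability at a fixed agent.} Fix Agent $j$. Its composite score is $\tilde{S}^{(j)}(z) = \max\{g^{(j)}(z), \max_{a\neq j} F^{(a)}(z)\}$. Here $g^{(j)}$ is trained on $H^{(j)} = h^{(j)}(U^{(j)},V^{(j)})$. The remote surrogates $F^{(a)} = f(\cdot\,; q(\mathcal{A}(H^{(a)})))$ are measurable functions of agent $a$'s full dataset.

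Let $\pi$ be any permutation of $U^{(j)}$ that fixes the training indices. By Assumption~\ref{assumption:exchangeability_invariance}, $(U^{(j)},V^{(j)}) \sim (U^{(j)}_\pi,V^{(j)})$, and $H^{(j)}$ is invariant under $\pi$. By Assumption~\ref{assumption:agent_independence}, $(U^{(j)},V^{(j)})$ is independent of $\{H^{(a)}\}_{a\neq j}$, so the joint law of $(U^{(j)},V^{(j)},H^{(j)},\mathbf{F})$ is $\pi$-invariant in the $U^{(j)}$ coordinate. Conditioning on $(V^{(j)},H^{(j)},\mathbf{F})$ and applying the deterministic pointwise maximum then gives exchangeability of $\tilde{S}^{(j)}_U$ conditional on $\tilde{S}^{(j)}_V$. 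This is exactly Lemma~\ref{lem:sparse_composite_exchangeability}, and the argument repeats that of Lemma~\ref{lem:composite_exchangeability}. Note that agent $j$'s own data need not be split from the remote models: the remote models depend only on other agents' data, never on agent $j$'s.

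\textbf{Step 2: local FDR control.} Assumption~\ref{assumption:no_ties} rules out ties among the scores. Theorem~\ref{thm:adadetect_prds} then makes agent $j$'s empirical $p$-values PRDS on $\mathcal{H}_0^{(j)}$. Running BH at level $\alpha/K$ therefore gives $\mathbb{E}[V^{(j)}/(R^{(j)}\vee 1)] \le \alpha/K$.

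\textbf{Step 3: global bound.} Since $R^{(j)} \le \sum_{a} R^{(a)}$, we have
\[
\frac{\sum_j V^{(j)}}{\left(\sum_a R^{(a)}\right)\vee 1} \le \sum_j \frac{V^{(j)}}{R^{(j)}\vee 1}.
\]
Taking expectations and summing the local bounds gives $\text{FDR}_{\text{global}} \le \alpha$, as in Lemma~\ref{lem:global_fdr_sum}. This inequality holds pointwise, so the statistical dependence between agents' $p$-value vectors is harmless. That dependence arises because agent $j$'s model is used in the other agents' scores while also being used in agent $j$'s own scores, and it is precisely why FastLSU, and hence Lemma~\ref{lemma:global_prds}, is unavailable here.

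\textbf{Main obstacle.} The delicate step is Step 1. It requires checking that every remote surrogate is genuinely independent of the agent's own evaluation data, so that it acts as a fixed function under $\pi$. I would argue this agent by agent rather than jointly across the network: this sidesteps the cross-agent overlap entirely, which matters only for the joint law and not for any single agent's conditional exchangeability.
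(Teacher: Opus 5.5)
Your proposal is correct and follows the paper's own proof. Both arguments obtain conditional exchangeability from Lemma~\ref{lem:sparse_composite_exchangeability} (you re-derive it rather than cite it), get PRDS and local FDR control at level $\alpha/K$ from Theorem~\ref{thm:adadetect_prds}, and conclude with the bound of Lemma~\ref{lem:global_fdr_sum}. Your remark that this last bound holds pointwise, so that cross-agent dependence of the $p$-value vectors is harmless, is a useful clarification of the paper's phrase ``independent agents'' in a regime where the paper itself notes that independence of the $p$-values is lost.
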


\begin{IEEEproof}
By Lemma~\ref{lem:sparse_composite_exchangeability}, the unsplit composite scores are conditionally exchangeable. This guarantees the PRDS property of the resulting local empirical $p$-values via Theorem~\ref{thm:adadetect_prds}. Applying the BH procedure on these PRDS $p$-values rigorously guarantees local FDR control at level $\alpha/K$ for each agent. By Lemma~\ref{lem:global_fdr_sum}, controlling local FDR at level $\alpha/K$ across all $K$ independent agents bounds the global FDR of the entire network at the target level $\alpha$.
\end{IEEEproof}

\subsection{Empirical Validation}
While this local $\alpha/K$ level preserves FDR, it sacrifices statistical power compared to the  non-sparse regime. Tables \ref{tab:sparse_gaussian_shift} and \ref{tab:sparse_quantization_sweep} detail the performance of the Conservative Model Exchange regime on the synthetic data setting outlined in Section \ref{sec:experiments}.

\begin{table}[ht]
\centering
\caption{Conservative Model Exchange: Simulated Dataset Shift ($\alpha=0.1$)}
\label{tab:sparse_gaussian_shift}
\footnotesize
\setlength{\tabcolsep}{3pt}
\begin{tabular}{l c c c}
\hline
\textbf{Shift ($\delta$)} & \textbf{Global FDR} & \textbf{Global Power} & \textbf{Communication (kb)} \\
\hline
$0$   & $0.030 \pm 0.01$ & $0.924 \pm 0.03$ & $15303 \pm 806$ \\
$0.5$ & $0.029 \pm 0.01$ & $0.925 \pm 0.03$ & $15349 \pm 600$ \\
$1.0$ & $0.030 \pm 0.02$ & $0.927 \pm 0.02$ & $15317 \pm 630$ \\
$2.0$ & $0.031 \pm 0.01$ & $0.890 \pm 0.03$ & $15666 \pm 777$ \\
$3.0$ & $0.028 \pm 0.01$ & $0.747 \pm 0.07$ & $16072 \pm 716$ \\
$4.0$ & $0.028 \pm 0.01$ & $0.667 \pm 0.01$ & $16036 \pm 491$ \\
\hline
\end{tabular}
\end{table}

\begin{table}[ht]
\centering
\caption{Conservative Model Exchange: Quantization Sweep ($\alpha=0.1, \delta=2.0$)}
\label{tab:sparse_quantization_sweep}
\footnotesize
\setlength{\tabcolsep}{3pt}
\begin{tabular}{l c c c}
\hline
\textbf{Precision} & \textbf{Global FDR} & \textbf{Global Power} & \textbf{Communication (kb)} \\
\hline
Unquantized & $0.031 \pm 0.01$ & $0.890 \pm 0.03$ & $15666 \pm 777$ \\
6-bit       & $0.032 \pm 0.01$ & $0.891 \pm 0.03$ & $1966 \pm 98$ \\
4-bit       & $0.031 \pm 0.01$ & $0.894 \pm 0.03$ & $1494 \pm 74$ \\
2-bit       & $0.032 \pm 0.01$ & $0.880 \pm 0.04$ & $1021 \pm 51$ \\
1-bit       & $0.031 \pm 0.01$ & $0.845 \pm 0.05$ & $785 \pm 39$ \\
\hline
\end{tabular}
\end{table}

\end{document}